   \def\@citecolor{blue}%
   \def\@urlcolor{blue}%
   \def\@linkcolor{blue}%
\renewcommand\paragraph{\@startsection{paragraph}{4}{\z@}%
                       {-5\p@ \@plus -4\p@ \@minus -4\p@}%
                       {-0.5em \@plus -0.22em \@minus -0.1em}%
                       {\normalfont\normalsize\bf}}
\def\orcidID#1{\smash{\href{http://orcid.org/#1}{\protect\raisebox{-1.25pt}{\protect\includegraphics{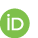}}}}}
\begin{document}

\title{Efficient Neural Network Analysis with Sum-of-Infeasibilities}

\author{
  Haoze Wu\inst{1}\href{mailto:haozewu@stanford.edu}{\Envelope}\orcidID{0000-0002-5077-144X}  \and
  Aleksandar Zelji\'c\inst{1}\orcidID{0000-0003-0673-9327} \and
  Guy Katz\inst{2}\orcidID{0000-0001-5292-801X} \and 
  Clark Barrett\inst{1}\orcidID{0000-0002-9522-3084}
}

\authorrunning{H. Wu et al.}

\institute{
  Stanford University, Stanford, USA \\ \and
  The Hebrew University of Jerusalem, Jerusalem, Israel \\
}

\maketitle
\begin{abstract}
  Inspired by sum-of-infeasibilities methods in convex optimization,
  we propose a novel procedure for analyzing verification
  queries on neural networks with piecewise-linear activation functions.
  Given a convex relaxation which over-approximates the non-convex activation functions,
  we encode the violations of activation functions as a cost function
  and optimize it with respect to the convex relaxation.
  The cost function, referred to as the Sum-of-Infeasibilities (\soi),
  is designed so that its minimum is zero and achieved only if all the
  activation functions are satisfied. We propose a stochastic procedure,
  \sys, to efficiently minimize the \soi. An extension to a canonical case-analysis-based
  complete search procedure can be achieved by replacing the convex procedure executed
  at each search state with \sys. Extending the complete search with \sys achieves multiple simultaneous
  goals: 1) it guides the search towards a counter-example; 2) it enables more informed
  branching decisions; and 3) it creates additional opportunities for bound derivation.
  An extensive evaluation across different benchmarks and solvers demonstrates the benefit
  of the proposed techniques. In particular,
  we demonstrate that \soi significantly improves the performance of
  an existing complete search procedure.  Moreover, the \soi-based implementation outperforms
  other state-of-the-art complete verifiers.  We also show that our technique
  can efficiently improve upon the perturbation bound derived by a recent adversarial attack algorithm.

\keywords{neural networks \and sum of infeasibilities \and convex optimization
  \and stochastic local search.}
\end{abstract}

\section{Introduction}
\label{sec:intro}

Neural networks have become state-of-the-art solutions in various application
domains, e.g., face recognition, voice
recognition, game-playing, and automated piloting~
\cite{masi2018deep,hinton2012deep,silver2016mastering,selfDriving}.
While generally successful, neural networks are known to be susceptible
  to input perturbations that humans are naturally invariant to~\cite{lbfgs,DBLP:conf/iclr/KurakinGB17a}.
This calls the trustworthiness of neural networks into
question, particularly in safety-critical domains.

In recent years, there has been a growing interest in applying formal methods to
neural networks to analyze certain robustness or safety \emph{specifications}~\cite{liu2019algorithms}.
Such specifications are often defined by a collection of partial
input/output relations: e.g., the network uniformly and correctly
classifies inputs within a certain distance (in some $l_p$ norm) of a selection
of input points. The goal of formal verification is to either prove that the
network meets the specification or to disprove it by constructing
a counter-example. 





Most standard activation functions in neural networks are non-linear,
making them challenging to reason about.
Consider the rectified linear unit (ReLU): if a ReLU can take both positive and negative inputs,
a verifier will typically need to consider, separately, each of these
two activation phases. Naive case analysis requires exploring a number of
combinations that is exponential in the number of ReLUs, which
quickly becomes
computationally infeasible for large networks.
To mitigate this complexity, neural network
verifiers typically operate on convex relaxations of the activation functions.
The relaxed problem can often be solved with an efficient convex procedure,
such as Simplex~\cite{reluplex,planet} or (sub-)gradient methods~\cite{sdp,dual1}.  
Due to the relaxation, however, a solution may be inconsistent with the true activation
functions.  When this happens, the convex procedure cannot make further
progress on its own. For this reason, to ensure completeness, the convex procedure
is typically embedded in an exhaustive search shell,
which encodes the activation functions explicitly and branches on them when needed.
While the exhaustive search ensures progress, it also brings back the problem of
combinatorial explosion.
This raises the key question:{ \bf can we guide the convex procedure to satisfy the
  activation functions without explicitly encoding them?} 


In convex optimization, the sum-of-infeasibilities (\soi) \cite{boyd2004convex}
function measures the error (with respect to variable bounds) of a variable
assignment. Minimizing the \soi naturally guides the procedure to a
satisfying assignment. In this paper, we extend this idea to instead
represent the error in the non-linear activation functions. The goal is to
``softly'' guide the search over the relaxed problem using information
about the precise activation functions. If an assignment is found for which
the \soi is zero, then not only is the assignment a solution for the relaxation,
but it also solves the precise problem.
%
%
Encoding the \soi w.r.t. the piecewise-linear activation functions yields a
concave piecewise-linear function, which is challenging to minimize directly.
Instead, we propose to minimize the \soi for individual \emph{activation patterns}
and reduce the \soi minimization to a \emph{stochastic} search for
the activation pattern where the \soi is minimal.
The advantage is that for each activation pattern, the \soi collapses
into a \emph{linear} cost function, which can be easily handled by a convex solver.
We introduce a specialized procedure, \sys, which uses Markov chain Monte Carlo
(MCMC) search to efficiently navigate towards activation patterns at the global
minimum of the \soi. If the minimal \soi is ever zero for an activation pattern,
then a solution has been found.

An extension to a canonical complete search procedure can be achieved by
replacing the convex procedure call at each search state with the \sys procedure.
Since the \soi contains additional information about the problem,
we propose a novel \soi-aware branching heuristic based on the estimated
impact of each activation function on the \soi.
Finally, \sys naturally preserves new bounds derived during the execution of
the underlying convex procedure (e.g., Simplex), which further prunes the search space
in the complete search.
For simplicity, we focus on ReLU activation functions in this paper,
though the proposed approach can be applied to any piecewise-linear activation function.


We implemented the proposed techniques in the Marabou framework for Neural
Network Analysis~\cite{marabou} and performed an extensive performance evaluation on a wide
range of benchmarks. We compare against multiple baselines and
show that extending a complete search procedure with
our \soi-based techniques results in significant overall speed-ups.
Finally, we present an interesting use case for our
procedure --- efficiently improving the perturbation bounds
found by AutoAttack~\cite{autoattack}, a state-of-the-art adversarial attack algorithm.


To summarize, the contributions of the paper are:
\begin{inparaenum}[(i)]
\item  a technique for guiding a convex solver with an \soi function w.r.t. the
  activation functions;
\item  \sys --- a procedure for minimizing the non-linear SoI via the interleaving
  use of an MCMC sampler and a convex solver;
\item an \soi-aware branching heuristic, which complements the integration of \sys
  into a case-analysis based search shell; and
\item a thorough evaluation of the proposed techniques. 
\end{inparaenum}

The rest of the paper is organized as follows. Section~\ref{sec:related} presents an
overview of related work. Section~\ref{sec:prelim} introduces preliminaries.
Section~\ref{sec:soi} introduces the \soi and proposes a solution for its minimization.
Section~\ref{sec:complete} presents the analysis procedure 
\sys, its use in the complete verification setting, and an \soi-aware branching heuristic.
Section \ref{sec:experiments} presents an extensive experimental evaluation. Conclusions
and future work are in Section \ref{sec:concl}.

\section{Related Work}
\label{sec:related}
Approaches to complete analysis of neural networks can be divided into
SMT-based~\cite{reluplex,marabou,planet}, reachability-analysis
based~\cite{nnenum,nnv,rpm,deepsplit,fromherz2020fast}, and the more general branch-and-bound
approaches~\cite{optAndAbs,mipverify,mip01,branching,babsr,peregrinn,dependency}.
As mentioned in \cite{unified}, these approaches are related, and differ
primarily in their techniques for bounding and branching.
Given the computational complexity of neural network verification, a diverse set
of research directions aims to improve performance in practice. Many approaches
prune the search space using tighter convex relaxations and bound
inference techniques~\cite{nnv,planet,dlv,deeppoly,kpoly,frown,crown,wong,reluval,neurify,sherlock,mipverify,fastlin,barrier,barrier-revisited,sdp,star,AI2,bcrown,singh2019boosting,cnn-cert,deepz}. Another direction leverages
parallelism by exploiting independent structures in the search space
\cite{gpupoly,xu2020fast,wu2020parallelization}.
Different encodings of the neural network verification problems have also been
studied: e.g., as MILP problems that can be tackled by off-the-shelf solvers
~\cite{mipverify,formalism}, or as dual problems admitting efficient GPU-based
algorithms~\cite{dual,dual1,dual2,scaling}.
\sys can be instantiated with any sound convex relaxations and matching convex
procedures. It can also be installed in any case-analysis-based
complete search shell, therefore integrating easily with existing parallelization
techniques, bound-tightening passes, and branching heuristics.



Two approaches most relevant to our work are Reluplex~\cite{reluplex} and
\peregrinn~\cite{peregrinn}. Reluplex invokes an LP solver to solve the relaxed
problem, and then updates its solution to satisfy the violated activation
functions --- with the hope of nudging the produced solutions
towards a satisfying assignment. 
However, the updated solution by Reluplex could violate the linear relaxation,
leading to non-convergent cycling between solution updates and LP solver
calls, which can only be broken by branching.
In contrast, our approach uses information
about the precise activation functions to \emph{actively} guide the convex solver.
Furthermore, in the limit \sys converges to a solution (if one exists).
\peregrinn
also uses an objective function to guide the solving of the convex relaxation.
However, their objective function \emph{approximates} the ReLU violation and does
not guarantee a real counter-example when the minimum is reached. In contrast, the \soi function
captures the \emph{exact} ReLU violation, and if a zero-valued point is found,
it is \emph{guaranteed} to be a real counter-example. We compare our techniques to \peregrinn
in Section~\ref{sec:experiments}.

We use MCMC-sampling combined with a convex procedure to minimize the concave
piecewise-linear \soi function. MCMC-sampling is a common approach for
stochastically minimizing irregular cost functions that are not amenable to exact
optimization techniques~\cite{jia2018beyond,schkufza2013stochastic,andrieu2003introduction}.
Other stochastic local search techniques~\cite{selman1994noise,gent1995hybrid}
could also be used for this task. However, we chose MCMC because it is adept at
escaping local optima, and in the limit, it samples more frequently the region
around the optimum value.
As one point of comparison, in Section~\ref{sec:experiments}, we compare MCMC-sampling
with a Walksat-based~\cite{selman1994noise} local search strategy.

\section{Preliminaries}
\label{sec:prelim}

\xhdr{Neural Networks}
We define a feed-forward, convolutional, or residual neural network with $k+1$ layers
as a set of \emph{neurons} $N$, topologically ordered into
\emph{layers} $L_0,..., L_k$, where $L_0$ is the input layer and $L_k$ is the output layer.
Given $n_i, n_j \in N$, we use $n_i \prec n_j$ to denote that the layer of $n_i$ precedes
the layer of $n_j$. The value of a neuron
$n_i \in N \backslash L_0$ is computed as $\act_i(b_i + \sum_{n_j \prec n_i}
w_{ij} * n_j)$,
an affine transformation of the preceding neurons followed by an activation function $\act_i$. We use $n^b_i$ and $n^a_i$ to represent
the pre- and post-activation values of such a neuron: $n^a_i = \act_i(n^b_i)$.  For $n_i\in L_0$, $n^b_i$ is undefined and we assume $n^a_i$ can take any value.
In this paper, we focus on ReLU neural networks. That is, $\act_i$
is the ReLU function ($\mathit{ReLU}(x) = \max(0,x)$) unless $n_i$ belongs to the output
layer $L_k$, in which case $\act_i$ is the identity function.  We use $\relu(N)$ to denote the set of ReLU neurons in $N$.  An
\emph{activation pattern} is defined by choosing a particular phase (either active or inactive) for every $n\in\relu(N)$ (i.e., choosing either $n^b_i < 0$ or $n^b_i \ge 0$ for each $n_i \in\relu(N)$).

\xhdr{Neural Network Verification as Satisfiability}
Consider the verification of a property $P$ over a neural network
$N$. The property $P$ has the form $P_{in} \Rightarrow P_{out}$, where
$P_{in}$ and $P_{out}$ constrain the input and output layers, respectively.
$P$ states that for each input point satisfying $P_{in}$, the output layer
satisfies $P_{out}$.
To formalize the verification problem, we first define
the set of \emph{variables} in a neural network $N$, denoted as $\var(N)$, to be $\cup_{n_i \in N\backslash L_k} \{n^a_i\}\ \cup\ \cup_{n_i\in N\backslash L_0} \{n^b_i\}$.
We
define a \emph{variable assignment}, $\alpha: \var(N) \rightarrow \mathbb{R}$,
to be a mapping from variables in $N$ to real values.
The verification task thus can be formally stated as finding a variable
assignment $\alpha$ that satisfies the following set of \emph{constraints}
over $\var(N)$ (denoted as $\phi$):%
\footnote{The verification can also be equivalently viewed as an optimization
problem~\cite{unified}.}
\begin{subequations}
  \begin{gather}
    \forall n_i \in
    N\backslash L_0, n^b_i = b_i + \sum_{n_j \prec n_i} w_{ij} * n^a_j \label{eq:neuron} \\
    \forall n_i \in \relu(N), n^a_i = \act_i (n^b_i) \label{eq:act}\\
    P_{in} \land \neg P_{out} \label{eq:prop}
  \end{gather}
  \label{eq:exact}
\end{subequations}

\vspace*{-3ex}
If such an assignment $\alpha$ exists, we say that $\phi$ is \emph{satisfiable} and
can conclude that $P$ does not hold, as from $\alpha$ we can retrieve an
input $x \in P_{in}$, such that the neural network's output violates $P_{out}$.
If such an $\alpha$ does \emph{not} exist, we say $\phi$ is \emph{unsatisfiable} and can
conclude that $P$ holds.
We use $\alpha \models \phi$ to denote
that an assignment $\alpha$ satisfies $\phi$. In short, verifying whether $P$
holds on a neural network $N$ boils down to deciding the satisfiability of
$\phi$. We refer to $\phi$ also as the \emph{verification query}
in this paper.

\xhdr{Convex Relaxation of Neural Networks} Deciding whether $P$ holds on a ReLU
network $N$ is NP-complete~\cite{reluplex}.
To curb intractability, many verifiers consider the convex (e.g., linear, semi-definite)
relaxation of the verification problem, sacrificing completeness in exchange
for a reduction in the computational complexity.
We use $\rlx{\phi}$ to denote the convex relaxation of the exact problem $\phi$.
If $\rlx{\phi}$ is unsatisfiable, then $\phi$ is also unsatisfiable, and property $P$ holds.
If the convex relaxation is satisfiable with satisfying assignment $\alpha$ and
$\alpha$ also satisfies $\phi$, then $P$ does not hold.

\begin{wrapfigure}{r}{0.45\linewidth}
  \vspace{-9mm}
  \centering
  \begin{tikzpicture}[scale=0.85]
    \draw[->, very thick] (-3, 0) -- (3, 0) node[right] {$n^b$};
    \draw[->, very thick] (0, 0) -- (0, 3) node[above] {$n^a$};
    \draw[domain=-2.5:0, very thick, variable=\x, blue] plot ({\x}, {0});
    \draw[domain=0:2.5, very thick, variable=\x, blue] plot ({\x}, {\x});
    \draw[domain=-2.5:2.5, very thick, dashed, variable=\x, blue] plot ({\x}, {0.5*\x+1.25});

    \draw[domain=0:2.5, thick, dashed, variable=\y] plot ({2.5}, {\y});

    \path[fill=red, opacity=0.25] (-2.5,0) -- (0,0) -- (2.5,2.5) -- cycle;

    \node at (0, -0.3) {0};
    \node at (2.5, -0.3) {$u$};
    \node at (-2.5, -0.3) {$l$};
  \end{tikzpicture}
  \vspace{-7mm}
  \caption{The Planet relaxation.}
  \label{fig:relu}
    \vspace{-7mm}
\end{wrapfigure}
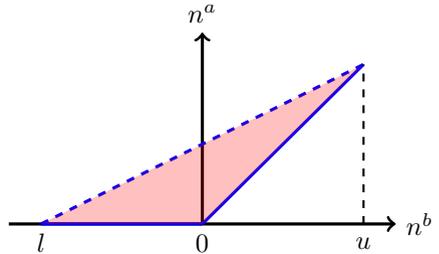

In this paper, we use the \emph{Planet relaxation} introduced in \cite{planet}.
It is a linear relaxation, illustrated in Figure \ref{fig:relu}.
Each ReLU constraint
ReLU($n^b$) = $n^a$ is over-approximated by three linear constraints: $n^a \geq 0$,
$n^a \geq n^b$, and $n^a \leq \frac{u}{u - l} n^b - \frac{u * l}{u - l}$, where
$u$ and $l$ are the upper and lower bounds of $n^b$,
respectively (which can be derived using bound-tightening techniques such as those in~\cite{reluval,deeppoly,crown}).
If Constraint \ref{eq:prop} is also linear, the convex relaxation $\rlx{\phi}$ is
a Linear Program, whose satisfiability can be decided efficiently (e.g., using
the \emph{Simplex} algorithm~\cite{simplex}).

\xhdr{Sum-of-Infeasibilities} In convex optimization~\cite{boyd2004convex,king2013simplex},
the sum-of-infeasibilities (SOI) method can be used to direct the feasibility search.
The satisfiability of a formula $\phi$ is cast as an optimization problem, with
an objective function representing the total \emph{error} (i.e., the sum of
the distances from each out-of-bounds variable to its closest bound).
The lower bound of $f$ is 0
and is achieved only if $\phi$ is satisfiable.
In our context, we use a similar function $\soifunc$, but with the
difference that it represents the total error of the ReLU constraints in
$\phi$. In our case, $\soifunc$ is non-convex, and thus a more
sophisticated approach is needed to minimize it efficiently.

\xhdr{Complete Analysis via Exhaustive Search}
One common approach for complete
verification involves constructing a search tree and calling a \emph{convex
procedure}   $\checkSat$  at each tree node,
as shown in Algorithm~\ref{alg:complete}. $\checkSat$ solves the convex
relaxation
$\rlx{\phi}$ and returns a pair $r,\alpha$ where either:
1) $r=\sat{}$ and $\alpha\models\rlx{\phi}$; or
2) $r=\unsat{}$ and $\rlx{\phi}$ is unsatisfiable.
If $\rlx{\phi}$ is unsatisfiable or $\alpha$ also satisfies $\phi$, then
the result for $\rlx{\phi}$ also holds for $\phi$ and is returned.
Otherwise, the search space
is divided further using $\branch$, which returns a set $\Psi$ of sub-problems such
that $\phi$ and $\bigvee \Psi$ are equisatisfiable.

\begin{wrapfigure}{r}{0.53\textwidth}
  \vspace{-1.6cm}
  \begin{minipage}{6.5cm}
    \begin{algorithm}[H]
      \small
      \begin{algorithmic}[1]
        \State {\bfseries Input:} a verification query $\phi$.
        \State {\bfseries Output:} \sat{}/\unsat{}
        \Function{completeSearch}{$\phi$}
        \State $\phi \assign \tightenBounds(\phi)$
        \State {$r, \alpha \assign \checkSat(\rlx{\phi}$)}
        \If{$r = \unsat{} \lor \alpha\models\phi$}
        \State {\bf return} $r$
        \EndIf
        \For {$\phi_i \in \branch(\phi)$}
        \If {$\textsc{completeSearch}(\phi_i) = \sat{}$}
        \State {\bf return} \sat{}
        \EndIf
        \EndFor
        \State {\bf return} \unsat{}
        \EndFunction
      \end{algorithmic}
      \caption{Complete search.\label{alg:complete}}
    \end{algorithm}
  \end{minipage}
  \vspace{-0.9cm}
\end{wrapfigure}
Before invoking $\checkSat$ to solve $\rlx{\phi}$, it is common to first
call an efficient bound-tightening procedure ($\tightenBounds$) to prune the
search space or even derive \unsat{} preemptively.  This
$\tightenBounds$ procedure can be instantiated in various ways, including with analyses
based on LiPRA~\cite{lipra,crown,deeppoly,wong}, kReLU~\cite{kpoly}, or
PRIMA~\cite{prima}. In addition to the dedicated bound-tightening
pass, some convex procedures (e.g., Simplex) also naturally lend themselves to
bound inference during their executions \cite{kingThesis,reluplex}.
The overall performance of Algorithm \ref{alg:complete} depends on
the efficacy of bound-tightening, the branching heuristics,
and the underlying convex procedure.


\xhdr{Adversarial attacks}
Adversarial attacks \cite{lbfgs,pgd,fgsm,cw} are another approach
for assessing neural network robustness.  While verification uses exhaustive
search to either prove or disprove a particular specification, adversarial
attacks focus on efficient heuristic algorithms for the latter. From another
perspective, they can demonstrate \emph{upper bounds} on neural network robustness.
In Section \ref{sec:experiments}, we show that our analysis procedure can
improve the bounds found by AutoAttack~\cite{autoattack}.

\section{Sum of Infeasibilities in Neural Network Analysis}
\label{sec:soi}
In this section, we introduce our \soi function,
consider the challenge of its minimization, and present a
stochastic local search solution.

\subsection{The Sum of Infeasibilities}

As mentioned above, in convex optimization, an \soi function represents the sum
of errors in a candidate variable assignment.  Here, we build on this
idea by introducing a cost function $\soifunc$, which computes the sum of errors
introduced by a convex relaxation of a verification query.
%
We aim to use $\soifunc$ to reduce the satisfiability problem for $\phi$ to a
simpler optimization problem.  We will need the following property to hold.
\begin{condition}
  \label{cond:soi}
  For an assignment $\alpha$, $\alpha \models \phi$ iff
  $\alpha \models \rlx{\phi} \wedge \soifunc \leq 0$. \label{condition:soi}
\end{condition}
If Condition~\ref{cond:soi} is met, then satisfiability of $\phi$ reduces to
the following minimization problem:
\begin{equation}
  \begin{aligned}
    \minimize_{\alpha} \quad &\soifunc\\
    \textrm{subject to} \quad &\alpha\models\rlx{\phi}
  \end{aligned}
  \label{eq:min}
\end{equation}

To formulate the \soi for ReLU networks, we first define the error in a ReLU
constraint $n$ as:
\begin{equation}
\vio(n) = \min(n^a - n^b, n^a)
\label{eq:vio}
\end{equation}
The two arguments correspond to the error when the ReLU is in the active and inactive
phase,
respectively. Recall that the Planet relaxation constrains $(n^b, n^a)$ in
the triangular area in Figure \ref{fig:relu}, where $n^a \geq n^b$ and $n^a \geq
0$. Thus, the minimum of $\vio(n)$ subject to $\rlx{\phi}$ is non-negative, and
furthermore, $\vio(n) = 0$ iff the ReLU constraint $n$ is satisfied (this is
also true for any relaxation at least as tight as the Planet relaxation).
%
We now define $\soifunc$ as the sum of errors in individual ReLUs:
\begin{equation}
  \soifunc = \sum_{n \in \relu(N)}\vio(n)
  \label{eq:soi}
\end{equation}
\begin{theorem}
  Let $N$ be a set of neurons for a neural network, $\phi$ a verification query
  (an instance of~\eqref{eq:exact}), and $\rlx{\phi}$ the planet relaxation of
  $\phi$.
  Then $\soifunc$ as given by~\eqref{eq:soi} satisfies Condition \ref{cond:soi}.
\end{theorem}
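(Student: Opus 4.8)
The plan is to prove the biconditional of Condition~\ref{cond:soi} directly, one direction at a time, leaning on two facts that the paragraph preceding the theorem already isolates: (a) the exact ReLU graph is contained in the Planet triangle, so any assignment satisfying the exact activation constraint \eqref{eq:act} also satisfies its Planet relaxation; and (b) subject to $\rlx{\phi}$ each summand $\vio(n)$ is non-negative, with $\vio(n)=0$ exactly when the ReLU constraint $n$ holds. The only structural observation needed beyond these is that $\phi$ and $\rlx{\phi}$ are built from the same affine constraints \eqref{eq:neuron} and the same property constraint \eqref{eq:prop}, differing \emph{only} in how the ReLU neurons are encoded; consequently everything reduces to controlling the $\vio(n)$ terms.

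For the forward direction I would assume $\alpha \models \phi$, so $\alpha$ satisfies \eqref{eq:neuron}, the exact activation constraints \eqref{eq:act}, and \eqref{eq:prop}. By observation (a), satisfying each exact ReLU constraint implies satisfying its three Planet inequalities, and since the remaining constraints of $\rlx{\phi}$ are shared with $\phi$, we get $\alpha \models \rlx{\phi}$. By observation (b), each satisfied ReLU contributes $\vio(n)=0$, hence $\soifunc = \sum_{n \in \relu(N)} \vio(n) = 0 \le 0$.

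For the backward direction I would assume $\alpha \models \rlx{\phi}$ and $\soifunc \le 0$. From $\alpha \models \rlx{\phi}$ every pair $(n^b,n^a)$ satisfies $n^a \ge n^b$ and $n^a \ge 0$, so $\vio(n)=\min(n^a-n^b,\,n^a)\ge 0$ and therefore $\soifunc \ge 0$. Combined with the hypothesis $\soifunc \le 0$ this forces $\soifunc = 0$; being a sum of non-negative terms, every $\vio(n)=0$. Observation (b) then gives that each ReLU constraint \eqref{eq:act} holds, and since $\alpha$ already satisfies \eqref{eq:neuron} and \eqref{eq:prop} as part of $\rlx{\phi}$, we conclude $\alpha \models \phi$.

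The step I expect to require the most care is the nontrivial half of observation (b), namely $\vio(n)=0 \Rightarrow$ ReLU satisfied, which genuinely uses the relaxation. From $\min(n^a-n^b,\,n^a)=0$ one must case-split on which argument attains the minimum: if $n^a-n^b=0$ then $n^a \ge 0$ yields $n^b = n^a \ge 0$ and $n^a = \max(0,n^b)$, whereas if $n^a=0$ then $n^a \ge n^b$ yields $n^b \le 0$ and again $n^a = \max(0,n^b)$. The converse half, together with the non-negativity invoked in the backward direction, is immediate from the two Planet inequalities $n^a \ge 0$ and $n^a \ge n^b$; the rest of the argument is bookkeeping about the constraints that $\phi$ and $\rlx{\phi}$ share.
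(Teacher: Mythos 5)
Your proof is correct and follows essentially the same route as the paper's: both reduce the biconditional to the non-negativity of each $\vio(n)$ under the Planet inequalities and the equivalence $\vio(n)=0 \Leftrightarrow$ ReLU $n$ satisfied, noting that $\phi$ and $\rlx{\phi}$ differ only in the ReLU constraints. You simply spell out the case analysis that the paper dismisses as ``straightforward,'' which is fine.
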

\vspace{-3mm}
\begin{proof}
  It is straightforward to show that $\soifunc$ subject to $\rlx{\phi}$ is
  non-negative and is zero if and only if each $\vio(n_i)$ is zero. That is,
  $\min{\soifunc}$ subject to $\rlx{\phi}$ is zero if and only if all ReLUs are
  satisfied. Therefore, if $\alpha$ satisfies $\phi$, then $\alpha \models \soifunc = 0$. On the
  other hand, since an assignment $\alpha$ that satisfies $\rlx{\phi}$ can only
  violate the ReLU constraints in $\phi$, if $\alpha \models \soifunc = 0$,
  then all the constraints in $\phi$ must be satisfied, i.e., $\alpha \models \phi$.
\end{proof}
Note that the error $\vio$, and its extension to \soi, can easily be defined for other
piecewise-linear functions besides ReLU.
%
%
We now turn to the question of minimizing $\soifunc$.  Observe that
\begin{equation}
  \min\soifunc = \min \sum_{n\in\relu(N)}\vio(n) = \min\Big(\big\{ f \mid
  f = \sum_{n_i \in \relu(N)} t_i, \quad t_i \in \{n^a_i - n^b_i, n^a_i\} \big\}\Big).
  \label{eq:rearrange}
\end{equation}
Thus, $\soifunc$ is the minimum over a set, which we will denote $\soiset$, of
linear functions. Although $\min \soifunc$
cannot be used directly as an objective in a convex procedure, we could
minimize each individual linear function $f\in \soiset$ with a convex procedure
and then keep the minimum over all functions.
We refer to the functions in $\soiset$ as \emph{phase patterns} of $\soifunc$.
For notational convenience, we define $\costOf(f,\phi)$ to be the minimum of $f$ subject to
$\phi$.
The minimization problem \eqref{eq:min} can thus be
restated as searching for the phase pattern $f\in \soiset$, where $\costOf(f,\rlx{\phi})$ is
minimal.
Note that for a particular \emph{activation pattern}, $\soifunc = f$ for
some $f\in\soiset$.
From
this perspective, searching for the $f\in \soiset$ where $\costOf(f,\rlx{\phi})$ is
minimal can also be viewed as searching for the activation pattern where the
global minimum of $\soifunc$ is achieved.

\subsection{Stochastically Minimizing the SoI with MCMC Sampling \label{subsec:minSOI}}
In the worst case, finding the minimal value of $\costOf(f,\rlx{\phi})$ requires enumerating and minimizing
each $f$ in $\soiset$ (or equivalently, minimizing $\soifunc$ for each activation pattern),
which has size $2^{\abs{\relu(N)}}$. However, importantly, the search can terminate
immediately if a phase pattern $f$ is found such that $\costOf(f,\rlx{\phi}) = 0$.
We leverage this fact below.  Note that each phase pattern has $\abs{\relu(N)}$ 
adjacent phase patterns, each differing in only one linear subexpression.
The space of phase patterns is thus fairly dense, making it amenable to traversal
using stochastic local search methods.
%
In particular, intelligent hill-climbing algorithms, which can be made
robust against local optima, are well suited for this task.

Markov chain Monte Carlo (MCMC)~\cite{mcmc-1} methods are such an approach.
In our context, MCMC methods can be used to generate a
sequence of phase patterns $f_0, f_1, f_2... \in \soiset$, with the desirable 
property that in the limit, the phase patterns are more frequently
from the minimum region of $\costOf(f,\rlx{\phi})$.

We use the Metropolis-Hastings (M-H) algorithm~\cite{mh}, a widely
applicable MCMC method, to construct the sequence.
The algorithm maintains a current phase pattern $f$ and proposes
to replace $f$ with a new phase pattern $f'$. The proposal
comes from a \emph{proposal distribution} $q(f' | f)$ and
is accepted with a certain \emph{acceptance probability} $m(f{\rightarrow}f')$.
If the proposal is accepted, $f'$ becomes the new current phase pattern. Otherwise,
another proposal is considered. This process is repeated until one of the following
scenarios happen:
1) a phase pattern $f$ is chosen with $\costOf(f,\rlx{\phi}) = 0$;
2) a predetermined computational budget is exhausted; or
3) all possible phase patterns have been considered.
The last scenario is generally infeasible for non-trivial networks. 
%
In order to employ the algorithm, we transform $\costOf(f,\rlx{\phi})$ into a probability
distribution $p(f)$ using a common method~\cite{mcmc}:
\[p(f) \propto \exp(-\beta \cdot \costOf(f,\rlx{\phi}))\]
where $\beta$ is a configurable parameter. If the proposal distribution is
symmetric
(i.e., $q(f| f') = q(f'| f)$), the acceptance probability is the following
(often referred to as the \emph{Metropolis ratio})~\cite{mcmc}:
\[
m(f{\rightarrow}f') = \min(1, \frac{p(f')}{p(f)}) = \min\bigg(1, \exp\Big(-\beta \cdot \big(\costOf(f',\rlx{\phi}) - \costOf(f,\rlx{\phi})\big)\Big)\bigg)
\]
Importantly, under this acceptance probability, \textit{a proposal reducing the
  value of the cost function is always accepted,
  while a proposal that does not may still be accepted}
(albeit with a probability that is inversely correlated with the increase in the cost).
This means that the algorithm always greedily moves to a lower cost phase
pattern whenever it can, but it also has an effective means for escaping local minima.
Note that since the sample space is finite, as long as the proposal strategy is \emph{ergodic},%
\footnote{A proposal strategy is ergodic if it is
capable of transforming any phase pattern to any other through a sequence of applications.
We use a symmetric and ergodic proposal distribution as explained in Section~\ref{subsec:deepsoi}.}
in the limit, the probability of sampling \emph{every} phase pattern
(therefore deciding the satisfiability of $\phi$) converges to 1.
However, we do not have formal guarantees about the convergence rate, and it is usually
impractical to prove unsatisfiability this way. Instead, as we shall see in the next section,
we enable complete verification by embedding the M-H algorithm in an exhaustive search shell.
\begin{outline}

\end{outline}

\section{The \sysT Algorithm} 
\label{sec:complete}

In this section, we introduce \sys, a novel verification algorithm
that leverages the \soi function, and show
how to integrate it with a complete verification procedure.  We also discuss the impact of \sys on
complete verification and propose an \soi-aware branching heuristic.

\subsection{\sysT
\label{subsec:deepsoi}}

\begin{wrapfigure}{r}{0.66\textwidth}
\vspace{-1.5cm}
\begin{minipage}[t]{7.65cm}
\begin{algorithm}[H]
\small
\begin{algorithmic}[1]
  \State {\bfseries Input:} A verification query $\phi$.
  \State {\bfseries Output:} \sat{}/\unsat{}/\unknown{}
  \Function{\sysT}{$\phi$}
  \State $r, \alpha_0 \assign \checkSat(\rlx{\phi})$ \tikzmark{top1}
  \If{$r = \unsat \lor \alpha_0\models\phi$} {\bf return} $r, \alpha_0$ \tikzmark{bottom1}
  \EndIf
  \State $k,f \assign 0, \initialPhase(\alpha_0)$ \tikzmark{top2}
  \State $\alpha, c \assign \optSat(f, \rlx{\phi})$
\While{$c > 0 \land \neg\allVisited() \land k < T$} 
\State $f' \assign \propose(f)$
\State $\alpha', c' \assign \optSat(f', \rlx{\phi})$
\If {$\accept(c,c')$} $f,c, \alpha \assign f', c', \alpha'$
\Else $\: k \assign k + 1$
\EndIf
\EndWhile
\If {$c = 0$} {\bf return} $\sat{}, \alpha$
\Else {\bf $\:$return} $\allVisited()$ ? $\unsat{}$ :  $\unknown{}$ \tikzmark{right2}
\EndIf\tikzmark{bottom2}
\vspace{-4.5mm}
\EndFunction
\end{algorithmic}
\AddNote{top1}{bottom1}{right2}{\scriptsize Phs.~I}
\AddNote{top2}{bottom2}{right2}{\scriptsize Phs.~II}
\caption{Analyzing $\phi$ with \sys .\label{alg:stoc}}
\end{algorithm}
\end{minipage}
\vspace{-6mm}
\end{wrapfigure}
Our procedure \sys, shown in Algorithm~\ref{alg:stoc}, takes an input
verification query $\phi$ and tries to determine its satisfiability.
\sys follows the standard two-phase convex optimization approach.
Phase I finds \emph{some} assignment $\alpha_0$ satisfying $\rlx{\phi}$,
and phase II attempts to optimize the assignment using the M-H algorithm. Phase
II uses a convex optimization procedure $\optSat$ which takes an objective
function $f$ and a formula $\phi$ as inputs and returns a pair $\alpha, c$,
where $\alpha\models\phi$ and $c = \costOf(f, \phi)$ is the optimal
value of $f$. Phase II chooses an initial phase pattern $f$ 
based on $\alpha_0$ (Line 6) and computes its optimal value $c$. The
M-H algorithm repeatedly proposes a new phase pattern $f'$ (Line 9), computes
its optimal value $c'$, and decides whether to accept $f'$ as the current
phase pattern $f$. The procedure returns \sat{} when a
phase pattern $f$ is found such that $\costOf(f,\rlx{\phi})=0$ and \unsat{} if
all phase patterns have been considered ($\allVisited$ returns true) before a
threshold of $T$ rejections is exceeded. Otherwise, the analysis is inconclusive
(\unknown). 

The $\accept$ method decides whether a proposal is accepted based on the Metropolis ratio
(see Section~\ref{sec:soi}). Function $\initialPhase$ proposes the initial phase pattern $f$ 
induced by the activation pattern corresponding to assignment
$\alpha_0$. Our proposal strategy ($\propose$) is also simple: pick a ReLU $n$
at random and flip its cost component in the current phase pattern $f$
(either from $n^a - n^b$ to $n^a$, or vice-versa). This proposal strategy is
symmetric, ergodic, and performs well in practice. Both the initialization strategy and the
proposal strategy are crucial to the performance of the M-H Algorithm, and
exploring
more sophisticated strategies
is a promising avenue for future work.
Importantly, the same convex procedure is used in both phases.
Therefore, from the perspective of the convex procedure, \sys solves a
sequence of convex optimization problems that differ only in the objective functions,
and each problem can be solved incrementally by updating
the phase pattern without the need for a restart.

\subsection{Complete Analysis and Pseudo-impact Branching}
\label{subsec:pi}

To extend a canonical complete verification procedure (i.e.,
Algorithm~\ref{alg:complete}), its $\checkSat$ call is replaced with \sys. Note
that the implementation of $\branch$ in this algorithm has a significant
influence on its performance. Here, we consider an \soi-aware implementation of
$\branch$, which makes decisions by selecting a particular ReLU to be active or
inactive.  The choice of \emph{which} ReLU is crucial. Intuitively, we want to branch on the ReLU with
the most impact on the value of $\soifunc$. After branching, $\sys$ should be
closer to either: finding a satisfying assignment (if $\soifunc$ is decreased), or
determining unsatisfiability (if $\soifunc$ is increased).
Computing the exact impact of each ReLU $n$ on $\soifunc$ would be expensive; however, we can
estimate it by recording changes in $\soifunc$ during the execution of \sys.

Concretely, for each ReLU $n$, we maintain its \emph{pseudo-impact},\footnote{
The name is in analogy to pseudo-cost branching heuristics in MILP,
where the integer variable with the largest impact on the objective function is chosen~\cite{pseudo-cost}.}
$\pseudo(n)$, which represents the estimated impact of $n$ on $\soifunc$.  For each $n$, $\pseudo(n)$ is initialized to 0. Then during the M-H algorithm, whenever the next proposal flips the cost component
of ReLU $n$, we calculate the local impact on $\soifunc$: $\Delta = \abs{\costOf(f,\rlx{\phi}) - \costOf(f',\rlx{\phi})}$.
We use $\Delta$ to update the value of $\pseudo(n)$ according to the \emph{exponential moving
  average} (EMA):
$\pseudo(n) = \gamma * \pseudo(n) + (1 - \gamma) \cdot \Delta$,
where $\gamma$ attenuates previous estimates of $n$'s impact.
We use EMA because recent estimates are more likely to be relevant to the current phase pattern.
At branching time, the \emph{pseudo-impact heuristic} picks $\argmax_n \pseudo(n)$ as
the ReLU to split on. The heuristic is inaccurate early in the search, so
we use a static branching order
(e.g.,~\cite{wu2020parallelization,babsr}) while the depth of the search tree is 
shallow (e.g., $< 3$).

\section{Experimental Evaluation}
\label{sec:experiments}

In this section, we present an experimental evaluation of the proposed techniques.  Our experiments include:
\begin{inparaenum}
\item an ablation study to examine the effect of each proposed technique;
\item a run-time comparison of our prototype with other complete analyzers;
\item an empirical study of the choice of the rejection threshold $T$ in Algorithm \ref{alg:stoc};
  and
\item an experiment in using our analysis procedure to improve the perturbation bounds found by
AutoAttack~\cite{autoattack}, an adversarial attack algorithm.
\end{inparaenum}

\subsection{Implementation.}

We implemented our techniques in Marabou~\cite{marabou},
an open-source toolbox for analyzing Neural Networks.
It features a user-friendly python API for defining properties
and loading networks, and a native implementation of the Simplex
algorithm.
Besides the Markov chain Monte Carlo stochastic local search
algorithm presented in Section~\ref{subsec:deepsoi} and the pseudo-impact branching
heuristic presented in Section~\ref{subsec:pi}, 
we also implemented a Walksat-inspired~\cite{selman1994noise} stochastic
local search strategy to evaluate the effectiveness
of MCMC-sampling as a local minimization strategy.
Concretely, from a phase pattern
$f$, the strategy greedily moves to a neighbor $f'$ of $f$, with
$\costOf(f',\rlx{\phi}) < \costOf(f,\rlx{\phi})$. If no such $f'$ exists
(i.e., a local minimum has been reached), the strategy moves to a random neighbor.

The $\checkSat$ and $\optSat$ methods in Algorithm~\ref{alg:stoc} can be instantiated with either
the native Simplex engine of Marabou or with Gurobi, an off-the-shelf (MI)LP-solver.
The $\tightenBounds$ method is instantiated with the DeepPoly analysis from~\cite{deeppoly},
an effective and light-weight bound-tightening pass, which is also 
implemented in Marabou.

\subsection{Benchmarks.}
We evaluate on networks from four different applications:
\benchmark{MNIST}, \benchmark{CIFAR10}, \benchmark{TaxiNet}, and \benchmark{GTSR}.
The network architectures are shown in Table \ref{tab:arch}.

\begin{wrapfigure}{r}{0.48\textwidth}
  \vspace{-6mm}
  \begin{minipage}[t]{5cm}
  \setlength\tabcolsep{4pt}
\centering		
\sffamily
\begin{scriptsize}
\begin{tabular}{lllcc}
\toprule
Bench. & Model & Type & ReLUs & Hid.~Layers \\
\cmidrule{1-5}
\benchmark{MNIST} & \benchmark{MNIST_1} & FC & 512 & 2 \\
                  & \benchmark{MNIST_2} & FC & 1024 & 4 \\
                  & \benchmark{MNIST_3} & FC & 1536 & 6 \\
\cmidrule{1-5}
\benchmark{TaxiNet} & \benchmark{Taxi1} & Conv & 688 & 6 \\
                  & \benchmark{Taxi2} & Conv & 2048 & 4 \\
                  & \benchmark{Taxi3} & Conv & 2752 & 6 \\

\cmidrule{1-5}
\benchmark{CIFAR10} & \benchmark{CIFAR10_b} & Conv & 1226 & 4 \\
                  & \benchmark{CIFAR10_w} & Conv &  4804 & 4 \\
                  & \benchmark{CIFAR10_d} & Conv & 5196 & 6 \\
\cmidrule{1-5}
\benchmark{GTSR} & \benchmark{GTSR_1} & FC & 600 & 3 \\
                  & \benchmark{GTSR_2} & Conv & 2784 & 4 \\

\bottomrule
\end{tabular}
\vspace{-2mm}
\caption{Architecture overview.}
\label{tab:arch}
\end{scriptsize}
  \end{minipage}
  \vspace{-8mm}
\end{wrapfigure}

The \benchmark{MNIST}~\cite{mnist} and \benchmark{CIFAR10}~\cite{cifar10}
networks are established benchmarks used in previous literature (e.g., \cite{scaling,peregrinn,wu2020parallelization,xu2020fast}) as well as
in the 2021 VNN Competition~\cite{vnncomp-21}.
Notably, the same \benchmark{MNIST} networks are used to evaluate
the original \peregrinn work.

For \benchmark{MNIST} and \benchmark{CIFAR10} networks, we check
robustness against targeted $l_{\infty}$ attacks on randomly selected images
from the test sets.
The target labels are chosen randomly from the incorrect labels,
and the perturbation bound is sampled uniformly from
$\{0.01, 0.03, 0.06, 0.09, 0.12, 0.15\}$. 
%
 The \benchmark{TaxiNet}~\cite{taxi} benchmark set
comprises robustness queries over regression models used for vision-based
autonomous taxiing. Given an image of the taxiway captured by the
aircraft, the model predicts its displacement (in meters) from the center of the
taxiway. A controller uses the output to adjust the heading of the aircraft.
Robustness is parametrized by input perturbation $\delta$ and output perturbation
$\epsilon$; we sample $(\delta, \epsilon)$ uniformly from $\{0.01,0.03, 0.06\} \times \{2, 6\}$. 
%
The \benchmark{GTSR} benchmark set comprises 
robustness queries on image classifiers trained on a
subset of the German Traffic Sign Recognition benchmark set \cite{gtsr}.
Given a $32\times32$ RGB image the networks classify it as one of
seven different kinds of traffic signs.
A hazing perturbation~\cite{deepcert} drains color from the image to create
a veil of colored mist.
 Given an image $I$, a perturbation parameter $\epsilon$,
and a haze color $C^{f}$, the perturbed image $I'$ is equal to
$(1-\epsilon) \cdot I + \epsilon \cdot C^{f}$.
The robustness queries check whether the bound yielded by the test-based method
in \cite{deepcert} is minimal.
All pixel values are normalized to
$[0,1]$, and the chosen perturbation values yield a mix of non-trivial
\sat{} and \unsat{} instances.


\subsection{Experimental Setup.}

%
%

Experiments are run on a cluster equipped with Intel Xeon E5-2637 v4 CPUs running
Ubuntu 16.04. Unless specified otherwise, each job is run with 1 thread,
8GB memory, and a 1-hour CPU timeout.
By default, the $\checkSat$ and $\optSat$ methods use Gurobi.
The following hyper-parameters are used: the rejection threshold $T$ in Algorithm
\ref{alg:stoc} is 2; the discount factor $\gamma$ in the EMA is 0.5; and the
probability density parameter $\beta$ in the Metropolis ratio is 10. These parameters
are empirically optimal on a subset of MNIST benchmarks.
In practice, the performance
is most sensitive to the rejection threshold $T$, and below
(Section~\ref{subsec:T}), we conduct experiments to
study its effect.


\begin{table}[t]
\setlength\tabcolsep{5pt}
\centering		
\sffamily
\begin{scriptsize}
\begin{tabular}{lccccccaacc}
  \toprule
Bench. (\#) & \multicolumn{2}{c}{\milp}
& \multicolumn{2}{c}{\lpSnc}
& \multicolumn{2}{c}{\soiSnc}
& \multicolumn{2}{c}{\soiMcmc}
& \multicolumn{2}{c}{\soiWSat}\\
\cmidrule(lr){2-3} \cmidrule(lr){4-5} \cmidrule(lr){6-7} \cmidrule(lr){8-9} \cmidrule(lr){10-11}
& Solv. & Time & Solv. & Time & Solv. & Time & Solv. & Time & Solv. & Time \\
\cmidrule{1-11}
\benchmark{MNIST_1}(90) & \textbf{77} & 19791 & 47 & 6892 & 66 & 5635 & 70 & 5976 & 68 & 5388  \\
\benchmark{MNIST_2}(90) & 29 & 6125 & 24 & 514 & \textbf{36} & 4356 & 31 & 757 & 31 & 909 \\
\benchmark{MNIST_3}(90) & 23 & 957 & 21 & 1609 & 34 & 9519 & \textbf{35} & 8327 & 33 & 5270 \\
\cmidrule{1-11}
\benchmark{Taxi_1}(90) & \textbf{90} & 786 & 61 & 9054 & 80 & 4257 & 89 & 1390 & 90 & 1489 \\
\benchmark{Taxi_2}(90) & 40 & 17093 & 2 & 891 & 70 & 5503 & \textbf{71} & 6889 & 71 & 7407 \\
\benchmark{Taxi_3}(90) & \textbf{89} & 5058 & 64 & 69715 & 87 & 1034 & 88 & 2164 & 87 & 997 \\
\cmidrule{1-11}
\benchmark{CIFAR10_b}(90) & \textbf{76} & 4316 & 26 & 7425 & 69 & 6286 & 73 & 16469 & 69 & 5200 \\
\benchmark{CIFAR10_w}(90) & 38 & 9879 & 18 & 845 & 41 & 4619 & 42 & 8129 & \textbf{42} & 6415 \\
\benchmark{CIFAR10_d}(90) & 30 & 4198 & 21 & 3395 & 51 & 17679 & 51 & 15056 & 51 & 15015 \\
\cmidrule{1-11}
\benchmark{GTSR_1}(90) & 90 & 2541 & \textbf{90} & 2435 & 89 & 4900 & 90 & 15238 & 90 & 4805 \\
\benchmark{GTSR_2}(90) & 90 & 23613 & \textbf{90} & 4456 & 90 & 7507 & 90 & 10426 & 90 & 6180 \\
\cmidrule{1-11}
Total (990) & 673 & 94354 & 463 & 107230 & 711 & 71294 & \textbf{730} & 90822 & 721 & 59073 \\
\bottomrule
\end{tabular}
\vspace{1mm}
\caption{Instances solved by different configurations and their runtime (in seconds) on
  solved instances. \label{tab:ablation}}
\end{scriptsize}
\vspace{-1cm}
\end{table}

\subsection{Ablation study of the proposed techniques.}

To evaluate each individual component of our proposed techniques, we run several
configurations across the full set of benchmarks described above.

We first consider two baselines that do not minimize the \soi:
\begin{inparaenum}
\item \lpSnc --- runs Algorithm~\ref{alg:complete} with the Split-and-Conquer~(SnC)
  branching heuristic~\cite{wu2020parallelization}, which estimates the number of
  tightened bounds from a ReLU split;
\item \milp --- encodes the query in Gurobi using MIPVerify's MILP
  encoding~\cite{mipverify}.%
\footnote{ This configuration does not use the LP/MILP-based preprocessing
    passes from MIPVerify \cite{mipverify} because they degrade performance on our benchmarks.}
\end{inparaenum}

We then evaluate three configurations of \soi-based complete analysis
parameterized by the branching heuristic and the \soi-minimization
algorithm:
\begin{inparaenum}
\item \soiSnc --- runs \sys with the SnC branching heuristic;
\item \soiMcmc --- runs \sys with the pseudo-impact~(PI) heuristic;
\item \soiWSat --- runs the Walksat-based algorithm with the PI heuristic.  
\end{inparaenum}
Each \soi configuration differs in one parameter w.r.t. the previous, so that
pair-wise comparison highlights the effect of that parameter.

Table~\ref{tab:ablation} summarizes the runtime performance of different configurations
on the four benchmark sets. The three configurations that minimize
the \soi, namely \soiMcmc, \soiWSat and \soiSnc,
all solve significantly more instances than the two baseline configurations.
In particular, \soiSnc solves 248 (53.4\%) more instances than \lpSnc. Since
all configurations start with the same variable bounds derived by the
DeepPoly analysis, the performance gain is mainly due to the use of \soi.

Among the three \soi configurations, the one with both \solver{pi} and \solver{mcmc} solves
the most instances. In particular, it solves 8 more instances than \soiWSat,
suggesting that MCMC sampling is, overall, a better approach
than the Walksat-based strategy. On the other hand,  \soiMcmc and \soiSnc show
complementary behaviors. For instance, the latter solves 5 more instances on
\benchmark{MNIST_1}, and the former solves 11 more on the \benchmark{Taxi} benchmarks. 
This motivates a portfolio configuration \portfolio,which runs \soiMcmc and \soiSnc
in parallel. This strategy is able to solve 742 instances overall with a 1-hour
wall-clock timeout, yielding a gain of at least 12 more solved instances
compared with any single-threaded configuration.

\subsection{Comparison with other complete analyzers.}

In this section, we compare our implementation with other complete analyzers.
We first compare with \peregrinn, which as described in Section~\ref{sec:related}
introduces a heuristic cost function to guide the search.
We evaluate \peregrinn on the \benchmark{MNIST} networks, the same set of networks
used in its original evaluation. We did not run \peregrinn on the other benchmarks
because it only supports \texttt{.nnet} format, which is designed for fully connected feed-forward
ReLU networks.

\begin{table}[t]
\setlength\tabcolsep{8pt}
\centering		
\sffamily
\begin{scriptsize}
\begin{tabular}{laacccccc}
  \toprule
  Bench. (\#)
  & \multicolumn{2}{c}{\soiMcmc}
  & \multicolumn{2}{c}{\solver{PeregriNN}}
  & \multicolumn{2}{c}{\solver{ERAN_{1}}}
  & \multicolumn{2}{c}{\solver{ERAN_{2}}} \\
  \cmidrule(lr){2-3} \cmidrule(lr){4-5} \cmidrule(lr){6-7} \cmidrule(lr){8-9}
& Solv. & Time & Solv. & Time & Solv. & Time & Solv. & Time \\
\cmidrule{1-9}
\benchmark{MNIST_1}(90) & 70 & 5976 & 64 & 11117 & \textbf{76} & 18679 & 75 & 19520 \\
\benchmark{MNIST_2}(90) & \textbf{31} & 757 & 31 & 2287 & 28 & 1910 & 28 & 3126 \\
\benchmark{MNIST_3}(90) & \textbf{35} & 8327  & 26 & 2344 & 24 & 1538 & 24 & 3292 \\
\cmidrule{1-9}
\benchmark{Taxi_1}(90) & 89 & 1390 & - & -  & \textbf{90} & 1653 & 90 & 3262 \\
\benchmark{Taxi_2}(90) & \textbf{71} & 6889 & - & -  & 40 & 16460 & 35 & 31778 \\
\benchmark{Taxi_3}(90) & 88 & 2164 & - & -  & \textbf{88} & 1389 & 88 & 4581 \\
\cmidrule{1-9}
\benchmark{CIFAR10_b}(90) & 73 & 16469 & - & - & \textbf{77} & 4604 & 77 & 14269 \\
\benchmark{CIFAR10_w}(90) & \textbf{42} & 8129 & - & - & 41 & 14403 & 37 & 14453 \\
\benchmark{CIFAR10_d}(90) & \textbf{51} & 15056 & - & - & 31 & 7587 & 26 & 5245 \\
\cmidrule{1-9}
\benchmark{GTSR_1}(90) & 90 & 15238 & - & - & \textbf{90} & 2023 & 90 & 32585 \\
\benchmark{GTSR_2}(90) & \textbf{90} & 10426 & - & - & 78 & 77829 & 75 & 81232 \\
\cmidrule{1-9}
Total (990) & \textbf{730} & 90822 & - & - & 663 & 148075 & 645 & 213343 \\
\bottomrule
\end{tabular}
\vspace{1mm}
\caption{Instances solved by different complete verifiers and their runtime (in seconds)
  on solved instances. \label{tab:other}}
\end{scriptsize}
\vspace{-1cm}
\end{table}

In addition, we also compare with ERAN, a state-of-the-art complete analyzer based on
abstract interpretation, on the full set of benchmarks.
ERAN is often used as a strong baseline in recent neural network verification
literature and was among the top performers in the past VNN Competition 2021.
We compare with two ERAN configurations:
\begin{inparaenum}
\item \solver{ERAN_1} --- ERAN using the DeepPoly analysis~\cite{deeppoly}
  for abstract interpretation and Gurobi for solving;
\item \solver{ERAN_2} --- same as above except using the k-ReLU analysis~\cite{kpoly} for
  abstract interpretation.
\end{inparaenum}
We choose to compare with ERAN instead of other state-of-the-art
neural network analyzers, e.g., alpha-beta crown~\cite{crown,bcrown}, OVAL~\cite{scaling},
and fast-and-complete~\cite{xu2020fast}, mainly because
the latter tools are GPU-based, while ERAN supports execution on
CPU, where our prototype is designed to run. This makes a fair comparison possible.
Note that our goal in this section is not to claim superiority over
all state-of-the-art solvers. Rather, the goal is to provide assurance that our
implementation is reasonable. As explained earlier, our approach can be integrated
into other complete search shells with different search heuristics, and is orthogonal to
techniques such as GPU-acceleration, parallelization, and tighter convex
relaxation (e.g., beyond the Planet relaxation), which are all future
development directions for Marabou.

Table \ref{tab:other} summarizes the runtime performance of different solvers.
We include again our best configuration, \soiMcmc, for ease of comparison.
On the three \benchmark{MNIST} benchmark sets, \peregrinn either solves fewer instances than
\soiMcmc or takes longer time to solve the same number of instances.
We note that \peregrinn's heuristic objective function could be employed during the
feasibility check of \sys (Line 4, Algorithm~\ref{alg:stoc}). Exploring this
complementarity between \peregrinn and our approach  is left as future work.

\begin{wrapfigure}{r}{0.5\textwidth}
  \vspace{-1cm}
  \begin{center}
    \includegraphics[width=0.5\textwidth]{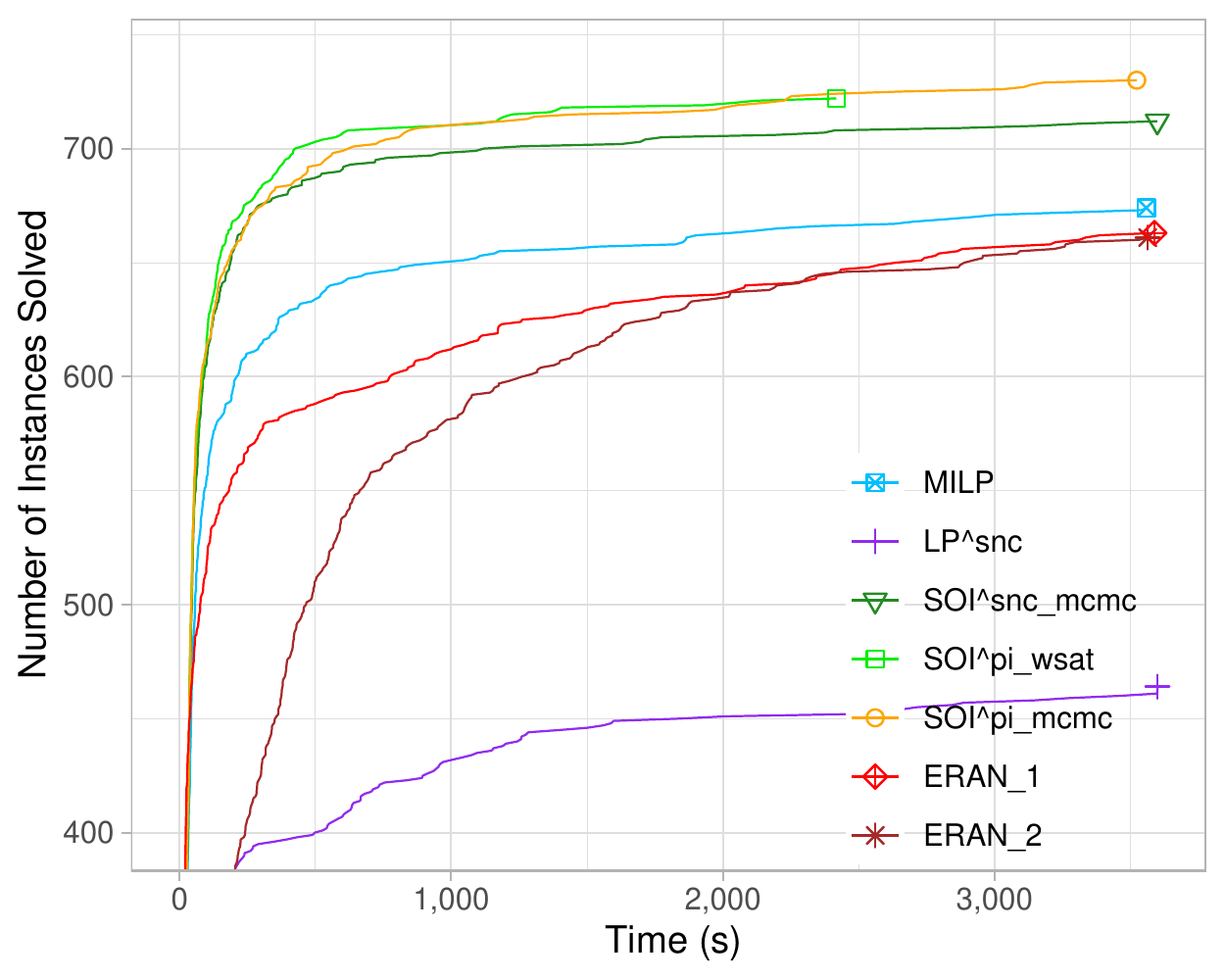}
  \end{center}
  \vspace{-5mm}
    \caption{Cactus plot on all benchmarks. \label{fig:cactus}}
    \vspace{-6mm}
\end{wrapfigure}
Compared with \solver{ERAN_1} and \solver{ERAN_2}, \soiMcmc also solves significantly
more instances overall, with a performance gain of at least
10.1\% more solved instances. Taking a closer look at the performance breakdown
on individual benchmarks, we observe complementary behaviors between
\soiMcmc and \solver{ERAN_1}, with the latter solving more instances than \soiMcmc
on 3 of the 11 benchmark sets. 
Figure~\ref{fig:cactus} shows the cactus plot of configurations that run on
all benchmarks. \solver{ERAN_1} is able to solve more instances than all the
other configurations when the time limit is short, but is overtaken by the
three \soi-based configurations once the time limit exceeds 30s. One explanation
for this is that the \soi-enabled configurations spend more time probing at each search state,
and for easier instances, it might be more beneficial to branch eagerly.

Finally, we compare the portfolio strategy \portfolio described in the previous
subsection to \solver{ERAN_1} running 2 threads. The latter solves 10.3\% fewer
instances (673 overall). Figure~\ref{fig:scatter} shows a scatter plot of
the runtime performance of these two configurations.
For unsatisfiable instances, most can be resolved efficiently by both solvers, and
each solver has a few unique solves. On the other hand, \portfolio is able to
solve significantly more satisfiable benchmarks.

\begin{figure}[t]
  \begin{minipage}{0.53\textwidth}
    \vspace{-4mm}
    \hspace{-6mm}
    \centering
    \includegraphics[width=5.9cm]{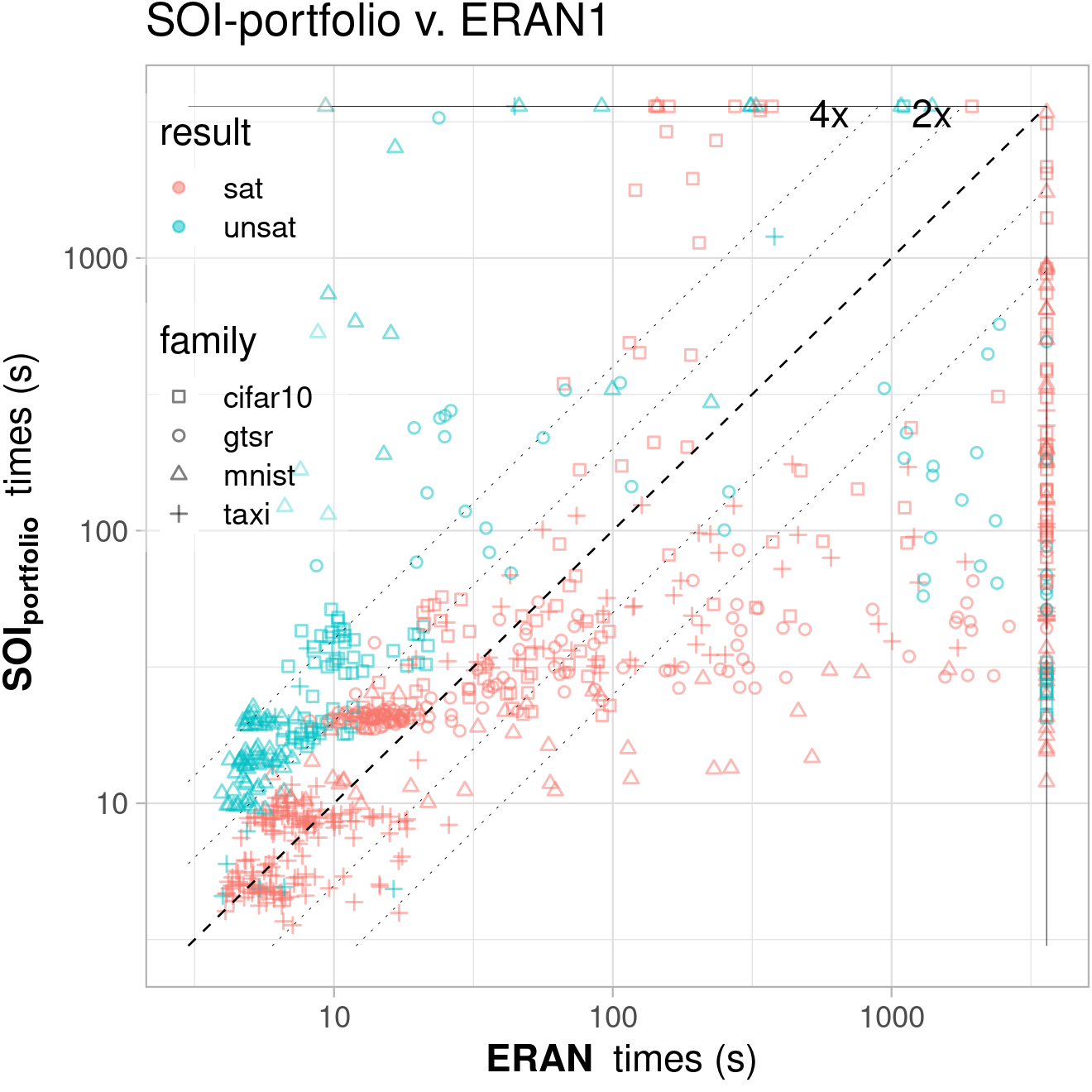}
    \caption{Runtime of \portfolio and \solver{ERAN_1} \\running with 2 threads. \label{fig:scatter}}
  \end{minipage}%
  \begin{minipage}{0.47\textwidth}
    \hspace{1mm}
    \centering
    \includegraphics[width=5.9cm]{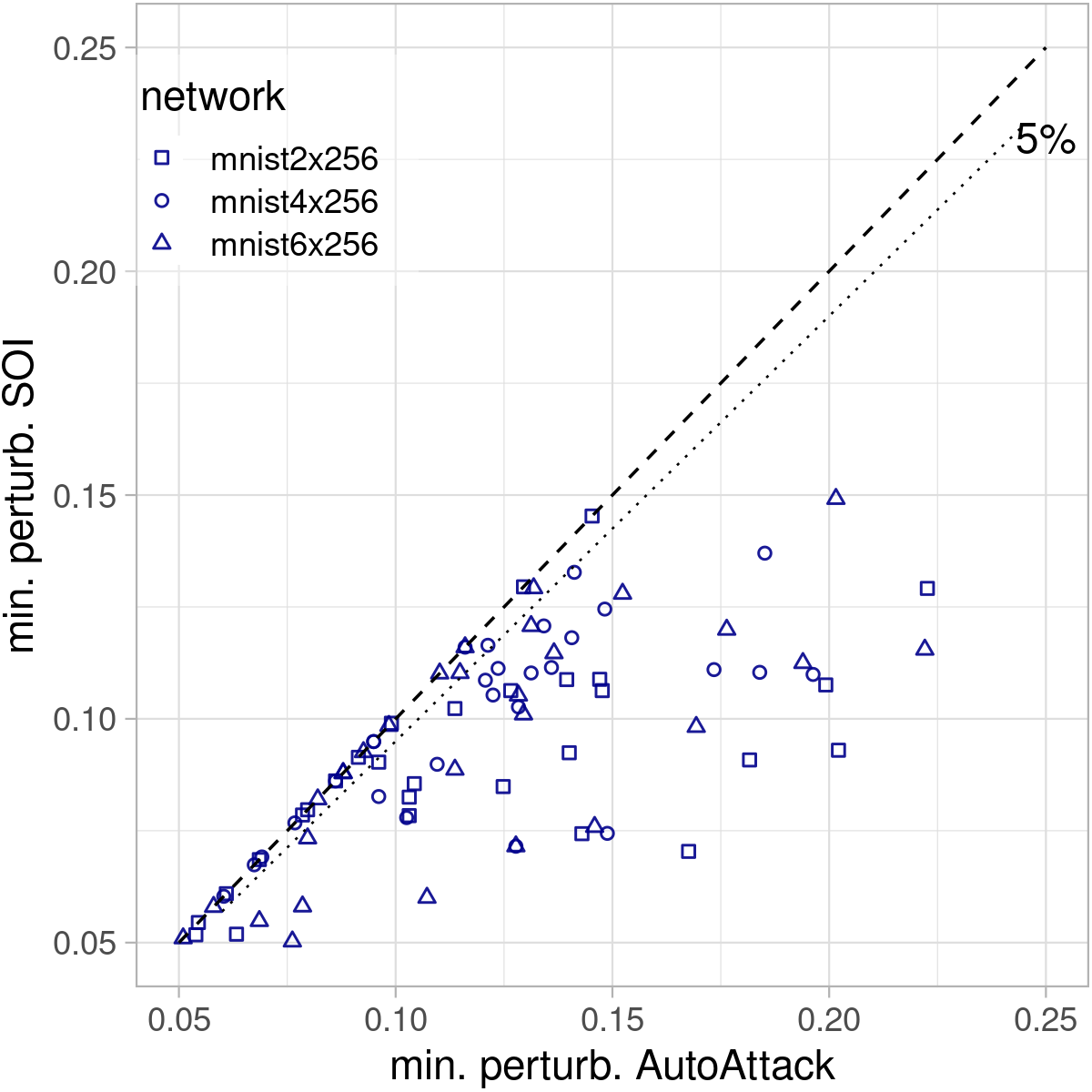}
    \caption{Improvements over the perturbation bounds found by AutoAttack. \label{fig:cw}}
  \end{minipage}
  \label{fig:main}
  \vspace{-4mm}
\end{figure}

\subsection{Incremental Solving and the Rejection Threshold $T$ \label{subsec:T}}
The rejection threshold $T$ in Algorithm \ref{alg:stoc} controls the number of
rejected proposals allowed before returning \unknown.  An \emph{incremental} solver is
one that can accept a sequence of queries, accumulating and reusing relevant
bounds derived by each query. To investigate the interplay of $T$ and incrementality, 
we perform an experiment using the incremental simplex engine in Marabou
while varying the value of $T$. We additionally control the branching order (by
using a fixed topological order). We conduct the experiment on 180 \benchmark{MNIST_1}
and 180 \benchmark{Taxi_1} benchmarks from the aforementioned distributions.
%

Table \ref{tab:t} shows the number of solved instances,
as well as the average time (in seconds) and number of search states on the 95 commonly
solved \unsat{} instances. As $T$ increases, more satisfiable
benchmarks are solved.

\begin{table}[h]
\vspace{-3.5mm}
\setlength\tabcolsep{7pt}
\centering		
\sffamily
\begin{scriptsize}
\begin{tabular}{lcccccc}
\toprule
Rejection threshold $T$ & 1 & 2 & 3 & 4 & 5 & 6 \\
\cmidrule{1-7}
SAT Solv. & 192 & 199 & 196 & 204 & 203 & \textbf{207} \\
\cmidrule{1-7}
UNSAT Solv. & \textbf{91} & 90 & 90 & 89 & 90 & 89 \\
Avg. time (common) & 97.75 & 129.0 & \textbf{83.6} & 108.1 & 137.0 & 187.8 \\
Avg. states (common) & 12948 & 12712 & 6122 & \textbf{5586} & 6404 & 8948 \\
\bottomrule
\end{tabular}
\vspace{2mm}
\caption{Effect of the rejection threshold. \label{tab:t}}
\end{scriptsize}
\vspace{-1cm}
\end{table}
Increasing $T$ can also result in improvement on unsatisfiable
instances---either the average time decreases, or fewer search states are required to solve
the same instance. We believe this improvement is due to the reuse of bounds
derived during the execution of \sys. This suggests that adding incrementality to
the convex solver (like Gurobi) could be highly beneficial for verification applications.
It also suggests that the bounds derived during the simplex execution cannot be subsumed
by bound-tightening analyses such as DeepPoly.

\subsection{Improving the perturbation bounds found by AutoAttack}
Our proposed techniques result in significant performance gain on satisfiable instances.
It is natural to ask whether the satisfiable instances solvable by the \soi-enabled
analysis can also be easily handled by adversarial attack algorithms, which as
mentioned in Section~\ref{sec:related}, focus solely on finding satisfying assignments.
In this section, we show that this is not the case by presenting an experiment where we
use our procedure in combination with AutoAttack~\cite{autoattack}, a state-of-the-art adversarial attack
algorithm, to find higher-quality adversarial examples.

Conceretely, we first use AutoAttack to find an upper bound on the minimal perturbation
required for a successful $l_{\infty}$ attack.
We then use our procedure to search for smaller perturbation bounds, repeatedly
decreasing the bound by 2\% until either UNSAT is proven or a timeout (30 minutes)
is reached. We use the adversarial label of the last successful attack found by
AutoAttack as the target label. We do this for the first 40 correctly classified
test images for the three MNIST architectures, which yields 120 instances.
Figure \ref{fig:cw} shows the improvement of the perturbation bounds.
Reduction of the bound is obtained for 53.3\% of
the instances, with an average reduction of 26.3\%, a median reduction of 22\%,
and a maximum reduction of 58\%. This suggests that our procedure can help obtain a
more precise robustness estimation.

\section{Conclusions and Future Work}
\label{sec:concl}

In this paper, we introduced a
procedure, \sys, for efficiently minimizing the sum of infeasibilities in activation
function constraints with respect to the convex relaxation of a neural network
verification query. We showed how \sys can be integrated into a complete
verification procedure, and we introduced 
a novel \soi-enabled branching heuristic. Extensive experimental results suggest
that our approach is a useful contribution towards scalable analysis of
neural networks.
Our work also opens up multiple promising future directions, including:
1) improving the scalability of \sys by using heuristically chosen subsets
of activation functions in the cost function instead of all unfixed activation
functions; 
2) leveraging parallelism by using GPU-friendly convex procedures or minimizing
the SoI in a distributed manner; 
3) devising more sophisticated initialization and proposal strategies for the
Metropolis-Hastings algorithm;
4) understanding the effects of the proposed branching heuristics on different
types of benchmarks;
5) investigating the use of \sys as a stand-alone adversarial attack algorithm.

\paragraph{Acknowledgements}
We thank Gagandeep Singh for providing useful feedback and Haitham Khedr for
help with running \peregrinn.
This work was partially supported by DARPA (grant FA8750-18-C-0099),
a Ford Alliance Project (199909), NSF (grant
1814369), and the US-Israel Binational Science Foundation (grant 2020250).

\newpage
{
\bibliographystyle{splncs04}
\bibliography{bibli}
}

\newpage
\appendix

\title{Appendix}

\author{
}

\institute{
}

\maketitle

\section{Additional Details on Experimental Setups \label{ap:details}}

\subsection{Additional Details on Benchmarks}
The \benchmark{MNIST} classifiers that we used can be found at the following link:
\url{https://github.com/stanleybak/vnncomp2021/tree/main/benchmarks/mnistfc},
with \benchmark{MNIST_1} corresponding to \benchmark{256x2},
\benchmark{MNIST_2} to \benchmark{256x4},
\benchmark{MNIST_3} to \benchmark{256x6}.
The \benchmark{CIFAR10} classifiers can be found at the following link:
\url{https://github.com/stanleybak/vnncomp2021/tree/main/benchmarks/oval21/nets},
with \benchmark{CIFAR10_b} corresponding to \benchmark{cifar\_base},
\benchmark{CIFAR10_w} to \benchmark{cifar\_wide},
\benchmark{CIFAR10_d} to \benchmark{cifar\_deep}.
The \benchmark{TaxiNet} benchmarks come from \cite{taxi}. 

The \benchmark{GTSR} classifiers and the hazing property come from
DeepCert~\cite{deepcert}.\footnote{\url{https://github.com/DeepCert/contextual-robustness}}
The original paper trains classifiers with three different architectures.
The \benchmark{GTSR_1} model in Table~\ref{tab:arch} is \benchmark{model2a}
in the original work. We did not include the first and the third architectures
in our evaluation. The first architecture contains only 70 ReLUs and the
resulting benchmarks are not challenging enough to differentiate different configurations.
The third architecture contains Max Pooling layers but this paper focuses on ReLUs.
Instead, we trained a second classifier \benchmark{GTSR_2} using the scripts
provided by the authors of \cite{deepcert} with the following architecture:
\begin{table}[h]
\setlength\tabcolsep{7pt}
\centering		
\sffamily
\begin{scriptsize}
\begin{tabular}{cccc}
\toprule
Type & Parameters/Shape & Activation & Dropout \\
\cmidrule{1-4}
Input & $32\times32\times3$ & - & - \\
\cmidrule{1-4}
Conv & $32$ $7\times 7$ filters, strides $3$ & ReLU & None \\
\cmidrule{1-4}
Conv & $32$ $5\times 5$ filters, strides $3$ & ReLU & None\\
\cmidrule{1-4}
Dense & $128\times 1$ & ReLU & $0.5$ \\
\cmidrule{1-4}
Dense & $64\times 1$ & ReLU & $0.5$\\
\cmidrule{1-4}
Dense & $7\times 1$ & None & None\\
\bottomrule
\end{tabular}
\end{scriptsize}
\end{table}

Following the original paper, we used 11220 images for training,
4110 images for validation, and 4110 images for testing.
Adam optimizer with learning rate 0.001 and the
Cross-entropy loss were used for training.

We checked the networks' robustness against
the hazing perturbation~\cite{deepcert},
which drains color from the image to create
a veil of colored mist (Figure~\ref{fig:hazing}).
 Given an image $I$, a perturbation parameter $\epsilon$,
and a haze color $C^{f}$, the perturbed image $I'$ is equal to
$(1-\epsilon) \cdot I + \epsilon \cdot C^{f}$.
The concrete property we verified is whether the
robustness bound discovered by the test-based method
in \cite{deepcert} is minimal. More precisely, if the smallest misclassifying
$\epsilon$ value found for a given image is $\epsilon_0$, we checked whether that
image can be misclassified when $\epsilon\in [0, 0.99\cdot \epsilon_0]$. We
found that this range of $\epsilon$ yields a mixture of interesting satisfiable
and unsatisfiable instances, and choosing smaller upper bounds of $\epsilon$
tends to yield easily unsatisfiable instances. This suggests that the test-based
method in ~\cite{deepcert} can often find high-quality adversarial examples close to the
perturbation bound. We hypothesize that this is due to the low-dimension of the
perturbation in the case of hazing.

\begin{figure}[h!]
\centering
\includegraphics[width=10cm]{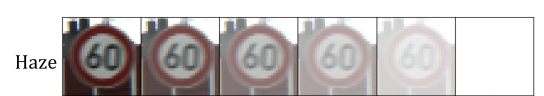}
\caption{Examples (from ~\cite{deepcert}) of applying the hazing
perturbations to \benchmark{GTSR} data.}
\label{fig:hazing}
\end{figure}

\subsection{Licenses}

The open source tools that we used in the paper include \solver{ERAN} (Apache
License 2.0), \solver{Marabou} (3-clause BSD license), and \solver{PeregriNN} (MIT License).
Our experiments also rely on the \texttt{Gurobi Optimizer 9.1},\footnote{\url{https://www.gurobi.com/}}
which is a commercial optimization tool but supports free usage with an academic
license for users within academic institutions.

\subsection{Other Details}
\paragraph{Complete Analysis}
We set the feasibility tolerance of linear constraints (or equivalence) to
$1e^{-9}$ and the integrality restriction tolerance (or equivalence) to
$1e^{-8}$ for all solvers.

For each verification instance, each configuration in Table~\ref{tab:ablation}
either terminated with an answer or timed out. No memory-out, errors, or
inconsistent results were observed.

For PeregriNN, we turn off its random sampling scheme in the beginning for fair
comparison of the decision procedure. The random sampling scheme could be used
as a pre-processing pass by each solver.

\paragraph{Improving the perturbation bounds found by AutoAttack}
AutoAttack~\cite{autoattack} is a state-of-the-art portfolio adversarial attack procedure. It
is designed to generate adversarial examples with respect to $L_1$, $L_2$, or $L_{\infty}$
norm bounded perturbations by iteratively applying four different adversarial attack algorithms.
We used the original AutoAttack library \footnote{\url{https://github.com/fra31/auto-attack}}
and the default parameters.

Since AutoAttack takes the perturbation bound as an input and searches
for an adversarial example within that bound, to adapt it for our use (i.e., minimizing the
perturbation bounds), we wrap AutoAttack in a canonical binary search to minimize
the perturbation bound found by AutoAttack. For the binary search, the initial
upper- and lower- perturbation bounds are set to 0.3 and 0, and the search terminates
when the difference between the upper and the lower bounds is below 0.001. We are able to
obtain at least one adversarial example (therefore a sound perturbation upper-bound)
for each test image this way.

We have also conducted the same experiment with the Carlini-Wagner (CW) Attack~\cite{cw}, which
does not take the perturbation bound as an input but instead tries to find a minimal perturbation.
We are able to reduce the perturbation bounds found by the CW attack for 85.2\% of the instances,
with a median reduction of 6\% and a maximum reduction of 14\%. We use the result on AutoAttack
in our paper because it is a more recent attack.

\section{Additional Experimental Results \label{ap:results}}

\subsection{Comparison with Marabou}

In addition to the baselinse and solvers described in the paper, we also compared with
the Reluplex~\cite{reluplex} procedure implemented in Marabou.
The results are shown in Table~\ref{tab:additional}. For reference, we also include
the run-time performance of \soiMcmc, our main configuration.
\begin{table}[h!]
\vspace*{1mm}
\setlength\tabcolsep{6pt}
\centering		
\sffamily
\begin{scriptsize}
\begin{tabular}{lcccc}
\toprule
Benchmark & \multicolumn{2}{c}{\solver{Reluplex}}
& \multicolumn{2}{c}{\soiMcmc} \\
\cmidrule(lr){2-3} \cmidrule(lr){4-5}
& Solv. & Time & Solv. & Time \\
\cmidrule{1-5}
\benchmark{MNIST1}(90) & 36 & 8429 & 70 & 5784 \\
\benchmark{MNIST2}(90) & 23 & 542 & 31 & 760 \\
\benchmark{MNIST3}(90) & 20 & 4357 & 35 & 7883 \\
\cmidrule{1-5}
\benchmark{Taxi1}(90) & 53 & 3589 & 89 & 2172 \\
\benchmark{Taxi2}(90) & 0 & 0 & 71 & 6369 \\
\benchmark{Taxi3}(90) & 0 & 0 & 87 & 1024 \\
\cmidrule{1-5}
\benchmark{CIFAR10_b}(90) & 33 &5493 & 74 & 19310 \\
\benchmark{CIFAR10_w}(90) & 0 & 0 & 42 & 8187 \\
\benchmark{CIFAR10_d}(90) & 6 & 119 & 51 & 14643 \\
\cmidrule{1-5}
\benchmark{GTSR_1}(90) & 90 & 62590 & 90 & 13456 \\
\benchmark{GTSR_2}(90) & 2 & 510 & 90 & 9275  \\
\cmidrule{1-5}
Total (990) & 263 & 85629 & 730 & 88863 \\
\bottomrule
\end{tabular}
\vspace{1mm}
\caption{\# instances solved by The Reluplex procedure and
its run-time on solved instances.
\label{tab:additional}}
\end{scriptsize}
\end{table}

\subsection{Results on MNIST and CIFAR10 with an alternative benchmarking scheme}

We have also considered a different benchmarking scheme for \benchmark{MNIST}
and \benchmark{CIFAR10}.
Instead of choosing the target labels randomly, we always pick the ``runner-up''
class (the label with the highest score other than the correct label).
We sample correctly classified test images from the same distribution of perturbation
bounds mentioned in Section~\ref{sec:experiments}.
The results are shown in Table~\ref{tab:main-alt}. The
performance pattern is similar to that in Table~\ref{tab:ablation}, where the
configurations that minimize the \soi (the last three configurations)
solve significantly more instances than the other configurations.

\begin{table}[h!]
\setlength\tabcolsep{5pt}
\centering		
\sffamily
\begin{scriptsize}
\begin{tabular}{lccccccccaa}
  \toprule
Bench. (\#) & \multicolumn{2}{c}{\milp}
& \multicolumn{2}{c}{\lpSnc}
& \multicolumn{2}{c}{\soiSnc}
& \multicolumn{2}{c}{\soiWSat}
& \multicolumn{2}{c}{\soiMcmc} \\
\cmidrule(lr){2-3} \cmidrule(lr){4-5} \cmidrule(lr){6-7} \cmidrule(lr){8-9} \cmidrule(lr){10-11}
& Solv. & Time & Solv. & Time & Solv. & Time & Solv. & Time & Solv. & Time \\
\cmidrule{1-11}
\benchmark{MNIST_1}(90) & 80 & 8207 & 42 & 7187 & 68 & 1578 & 74 & 11573 & \textbf{77} & 8124 \\
\benchmark{MNIST_2}(90) & 33 & 1280 & 27 & 945 & \textbf{51} & 1843 & 49 & 5724 & 50 & 6793 \\
\benchmark{MNIST_3}(90) & 23 & 2394 & 24 & 8581 & 33 & 7174 & 31 & 1444 & \textbf{33} & 2314 \\
\cmidrule{1-11}
\benchmark{CIFAR10_b}(90) & \textbf{65} & 10581 & 16 & 3234 & 56 & 6510 & 55 & 3135 & 61 & 20034 \\
\benchmark{CIFAR10_w}(90) & 36 & 8257 & 18 & 870 & 37 & 9589 & 36 & 7386 & \textbf{37} & 8105 \\ 
\benchmark{CIFAR10_d}(90) & 33 & 6825 & 26 & 890 & \textbf{49} & 10673 & 48 & 8950 & 49 & 12046 \\
\cmidrule{1-11}
Total(540) & 270 & 37544 & 153 & 21707 & 294 & 37367 & 293 & 38212 & \textbf{307} & 57416 \\
\bottomrule
\end{tabular}
\vspace{1mm}
\caption{\# instances solved by different configurations and their run-time on
  solved instances under an alternative benchmarking scheme. \label{tab:main-alt}}
\end{scriptsize}
\vspace{-0.7cm}
\end{table}

\section{Probability of Sampling an Arbitrary Phase Pattern \label{ap:proof}}
\begin{theorem}
Given a neural network verification query $\phi$, its corresponding \soi
function $\soifunc$, and the set of phase patterns $\soiset$, if
we use the Metropolis ratio as the acceptance probability and 
the proposal strategy described in Section~\ref{sec:complete},
the probability that the M-H algorithm \emph{ever} samples an arbitrary phase
pattern $s\in\soiset$ converges to 1 as the length of the constructed Markov
chain goes to $\infty$.
\end{theorem}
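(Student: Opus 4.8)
The plan is to recognize the sequence of phase patterns $f_0, f_1, f_2, \ldots$ produced by the M-H algorithm as a realization of a time-homogeneous Markov chain on the finite state space $\soiset$, and then to invoke the classical fact that a finite, irreducible Markov chain hits every state in finite time with probability one. First I would write down the one-step transition kernel explicitly: from a current pattern $f$, the proposal $\propose$ picks one of the $\abs{\relu(N)}$ ReLUs uniformly and flips its cost component, so $q(f'\mid f) = 1/\abs{\relu(N)}$ whenever $f'$ is one of the $\abs{\relu(N)}$ neighbors of $f$ (and $0$ otherwise); the proposal is then retained with the Metropolis acceptance probability $m(f\to f')$, with the residual mass forming a self-loop that records rejections. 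Because this kernel does not depend on the step index the chain is time-homogeneous, and because $\abs{\soiset} = 2^{\abs{\relu(N)}}$ the state space is finite.

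The crux is to establish irreducibility, where two observations combine. First, I would argue that the cost $\costOf(f,\rlx{\phi})$ is a finite nonnegative real for every $f\in\soiset$: in Phase~II the relaxation $\rlx{\phi}$ is already known to be feasible, and each $\vio(n)\ge 0$ on $\rlx{\phi}$, so every linear $f$ attains a finite minimum bounded below by $0$. Consequently the Metropolis ratio $m(f\to f') = \min\bigl(1, \exp(-\beta(\costOf(f',\rlx{\phi}) - \costOf(f,\rlx{\phi})))\bigr)$ is strictly positive for every pair of neighbors, since the exponential of a finite real is strictly positive. Second, the proposal is ergodic: any pattern can be turned into any other by flipping the ReLUs on which they disagree one at a time, so the neighbor graph on $\soiset$ (the Boolean hypercube) is connected. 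Combining the two, for any ordered pair $(f,g)$ there is a finite path of single flips from $f$ to $g$ along which every proposal probability and every acceptance factor is strictly positive, so the corresponding multi-step transition probability is strictly positive. Hence the chain is irreducible.

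Finally I would close with the textbook consequence: a finite irreducible Markov chain is (positive) recurrent, so from any initial pattern $f_0$ the first hitting time $\tau_s = \min\{k : f_k = s\}$ of an arbitrary target $s\in\soiset$ is almost surely finite. Writing the event ``the algorithm ever samples $s$ within the first $n$ steps'' as $\{\tau_s \le n\}$, monotone continuity of probability gives $\Pr[\tau_s \le n] \to \Pr[\tau_s < \infty] = 1$ as $n\to\infty$, which is exactly the claim. I expect the main obstacle to be the irreducibility step --- specifically, making precise that ergodicity of the \emph{proposal} together with strict positivity of the Metropolis acceptance yields irreducibility of the \emph{realized} chain; the finiteness of the costs (hence positivity of every acceptance factor) is the detail that must be checked to rule out a transition being silently blocked. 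Note that aperiodicity is not needed here, since only recurrence, not convergence of the stationary distribution, is required for the ``ever samples'' statement.
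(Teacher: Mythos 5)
Your proof is correct and follows essentially the same route as the paper's: ergodicity of the flip-one-ReLU proposal (connectivity of the hypercube of phase patterns) plus strict positivity of the Metropolis acceptance probability gives irreducibility, and finiteness of the state space then yields an almost-surely finite hitting time for any target pattern. You add some welcome rigor the paper omits --- in particular, checking that each $\costOf(f,\rlx{\phi})$ is finite so no acceptance factor degenerates to zero, and noting that aperiodicity is not needed --- but the argument is the same.
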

\begin{proof}
Recall that the proposal strategy in Section~\ref{sec:complete} is to randomly
select a ReLU and flip its error term in the current phase pattern $f$.
This means that for any arbitrary pair of phase patterns $f_1, f_2\in\soiset$,
there exists a finite sequence of proposals that changes $f_1$ to
 $f_2$. Moreover, since the acceptance probability ensures that there is some
 probability to move from $f_1$ to any of its neighbors, the probability of going from
 $f_1$ to $f_2$  in some finite steps is positive.
 Since the state space is finite, the expected number of steps to reach $f_2$
 from $f_1$ is finite. Therefore, as the number of steps goes to $\infty$,
 the probability that the Markov chain contains $f_2$ goes to 1.
\end{proof}

\vfill

{\small\medskip\noindent{\bf Open Access} This chapter is licensed under the terms of the Creative Commons\break Attribution 4.0 International License (\url{http://creativecommons.org/licenses/by/4.0/}), which permits use, sharing, adaptation, distribution and reproduction in any medium or format, as long as you give appropriate credit to the original author(s) and the source, provide a link to the Creative Commons license and indicate if changes were made.}

{\small \spaceskip .28em plus .1em minus .1em The images or other third party material in this chapter are included in the chapter's Creative Commons license, unless indicated otherwise in a credit line to the material.~If material is not included in the chapter's Creative Commons license and your intended\break use is not permitted by statutory regulation or exceeds the permitted use, you will need to obtain permission directly from the copyright holder.}

\medskip\noindent\includegraphics{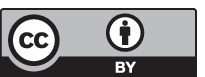}

\end{document}